\documentclass[11pt]{article}

\usepackage[preprint]{acl}

\usepackage{times}
\usepackage{latexsym}

\usepackage[T1]{fontenc}

\usepackage[utf8]{inputenc}

\usepackage{microtype}

\IfFileExists{inconsolata.sty}{\usepackage{inconsolata}}{}

\usepackage{graphicx}

\usepackage{amsmath}
\usepackage{multirow}
\usepackage{makecell}
\usepackage{colortbl}
\usepackage{amssymb}
\usepackage{mathtools}
\usepackage{subcaption}
\IfFileExists{algorithm.sty}{\usepackage{algorithm}}{}
\IfFileExists{algorithmic.sty}{\usepackage{algorithmic}}{}
\usepackage{amsthm}
\IfFileExists{bbm.sty}{\usepackage{bbm}}{}
\usepackage{natbib}
\usepackage{threeparttable} 
\usepackage{arydshln}
\usepackage{wrapfig}

\usepackage[T1]{fontenc}    
\usepackage{hyperref}       
\usepackage{url}            
\usepackage{booktabs}       
\usepackage{amsfonts}       
\usepackage{nicefrac}       
\usepackage{xcolor}         

\usepackage{cleveref}

\title{Asynchronous On-Policy Self-Distillation}

\author{
 \textbf{Xidong Wu \textsuperscript{1}},
 \textbf{Qi Qi \textsuperscript{2}},
 \textbf{Yuqiong Ji\textsuperscript{3}},
 \textbf{Shangqian Gao\textsuperscript{4}}
\\
 \textsuperscript{1}University of Pittsburgh,
 \textsuperscript{2}University of Iowa,
 \textsuperscript{3}University of Pennsylvania,
 \textsuperscript{4}Florida State University
}

\begin{document}
\maketitle
\begin{abstract}
Unlike on-policy distillation (OPD), which depends on a separate teacher model, on-policy self-distillation (OPSD) leverages a single language model acting simultaneously as the student and the teacher, with the teacher conditioned on privileged information. Despite this architectural simplicity, synchronous rollout generation dominates overall training time, creating a severe systems bottleneck when scaling OPSD. To address this, we introduce Asynchronous On-Policy Self-Distillation (AsyncOPSD), a framework that decouples rollout generation from learner updates to improve training throughput. Since the teacher and student share parameters, AsyncOPSD eliminates the multi-model coordination overhead inherent to OPD architectures in asynchronous training. As the student policy evolves, buffered trajectories generated by older policies become stale, potentially degrading performance. To mitigate this, we introduce a distillation objective based on staleness-aware policy control and token-level entropy weighting. Across three model sizes and mathematical reasoning benchmarks, AsyncOPSD improves the training throughput by $1.96$--$2.25\times$ over synchronous OPSD while increasing average benchmark accuracy.
\end{abstract}

\section{Introduction}
On-policy distillation (OPD) has been used as an effective approach in the large language model (LLM) post-training \cite{yang2025qwen3,zeng2026glm,gu2024minillm}. A student is trained on its own generated on-policy responses using token-level feedback from a teacher. 
Training on student-generated prefixes mitigates the train-test distribution shift inherent in distilling from fixed trajectories \cite{agarwal2024policy}.
OPD typically uses a separate large teacher model while On-policy self-distillation (OPSD) instead instantiates both roles from a single language model. The privileged information, including a verified solution reasoning trace or feedback, augments the teacher’s context, while the student receives only the question. Matching their token-level distributions transfers the benefit of this additional context to the student without requiring an external teacher \cite{zhao2026self}.

Despite this simplification, large-scale OPSD and OPD inherit the on-policy rollout bottleneck. Long reasoning traces dominate training time, and strict synchronization forces the learner to wait for new rollouts, leaving compute resources idle. While asynchronous execution improves utilization by generating rollouts and updating the learner concurrently, buffered trajectories quickly become stale. Training on this outdated data introduces a distribution mismatch
and creates a performance–efficiency trade-off \cite{kang2026asyncopd}. Recent work explores this trade-off in asynchronous OPD \cite{wang2026divopd, rang2026near, kang2026asyncopd}. 
These studies highlight the importance of designing asynchronous pipelines. 
We study asynchronous execution for OPSD by overlapping rollout generation and learner-side privileged-context distillation under a fixed GPU budget.
\begin{figure*}[th]
  \centering
  \includegraphics[width=\textwidth]{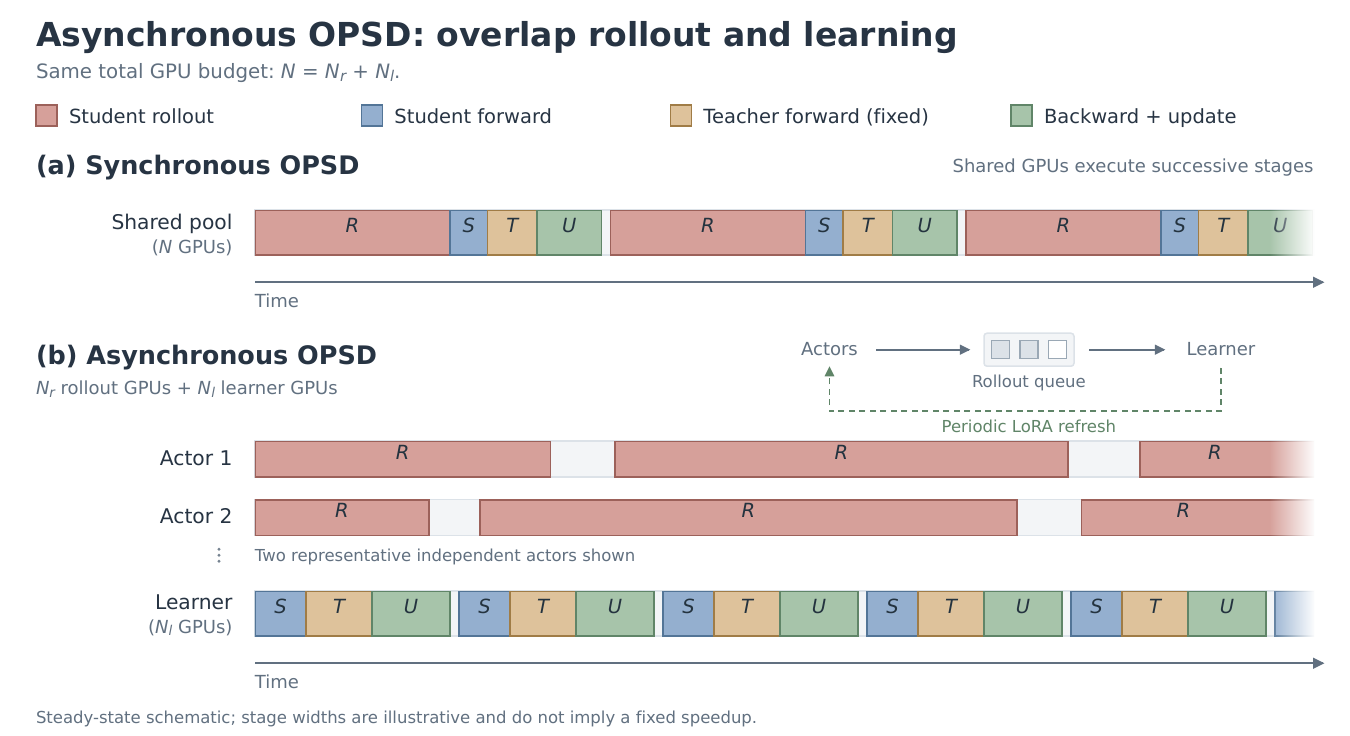}
  \caption{\textbf{Overview of AsyncOPSD.}
  Synchronous and asynchronous OPSD under the same budget.
  (a) Synchronous OPSD executes rollout and learning sequentially on $N$ GPUs. (b) Asynchronous OPSD assigns $N_r$ GPUs to independent rollout actors and $N_l$ GPUs
  to a single distributed learner ($N = N_r + N_l$). Actors add rollout microbatches in a shared buffer, and the learner periodically updates policy to the actors. 
  }
  \label{fig:async-opsd-overview}
\end{figure*}

We introduce Asynchronous On-Policy Self-Distillation (AsyncOPSD), a framework that decouples rollout generation from learner updates. Learners update model weights asynchronously from a shared replay buffer and periodically export the current policy (model weights), and distributed rollout actors generate trajectories after synchronizing the refreshed policy. At each update, learners evaluate buffered trajectories under both the student and a privileged teacher context. Computing teacher supervision on the learner side eliminates teacher-scoring stages and the communication overhead of transmitting cached teacher distributions, substantially simplifying system coordination compared to AsyncOPD \cite{kang2026asyncopd}.


However, buffered sequences inevitably reflect the behavioral distribution of older policies. Our theoretical analysis bounds the discrepancy using prefix-distribution mismatch and an entropy-dependent loss bound. This motivates two complementary controls: 1) Staleness-Aware Policy Control: Limits the age of rollouts accepted for training; 2) Token-Level Entropy Weighting: Reduces the relative contribution of high-entropy teacher predictions. These techniques act on rollout selection and token-level loss weighting, respectively. Systematic ablation studies evaluate their efficacy. Our main contributions are summarized as follows:

\begin{itemize}
\item  AsyncOPSD Framework: Introduced an asynchronous on-policy self-distillation architecture that overlaps rollout generation with optimization while computing privileged teacher supervision directly within the learner, removing the multi-model coordination overhead in asynchronous OPD training.
\item Staleness \& Uncertainty Mitigation:  Analyzed how prefix staleness and teacher entropy enter an objective-discrepancy bound and proposed staleness-aware policy control and token-level entropy weighting to stabilize training.
\item Empirical Acceleration: Evaluated AsyncOPSD across challenging reasoning benchmarks, achieving $1.96$--$2.25\times$ the sample throughput of synchronous OPSD and increasing average benchmark accuracy 
by $0.37$--$1.10$ percentage points across three model sizes.
\end{itemize}

\section{Related Work}
\subsection{On-Policy and Self-Distillation}
On-policy distillation (OPD) reduces the mismatch between training and inference by supervising trajectories sampled from the student \cite{agarwal2024policy, gu2023minillm, lu2025onpolicy, ko2024distillm,yang2026learning}. Per-token KL-divergence feedback from a teacher model in the OPD also offers a dense learning signal at every step to reinforce sub-optimal reasoning. Recent studies optimize transfer efficiency via thinking-pattern compatibility \cite{li2026rethinking}, teacher-entropy regularization \cite{jin2026entropy}, and uncertainty-disagreement token filtering \cite{xu2026tip}. Other OPD efficiency methods adapt rollout horizons~\cite{tan2026adaptive} or combine rollout-depth control with turn-normalized supervision~\cite{zhou2026turnopd}.
Self-distillation extends self-taught reasoning paradigms \cite{zelikman2022star} by removing external teachers entirely, extracting supervision from self-generated reasoning traces \cite{zhao2026self}, environment execution feedback \cite{hubotter2026reinforcement}, hybrid reward-distillation objectives on failed rollouts \cite{li2026unifying}, or majority-vote pseudo-solutions~\cite{li2026unsupervised}. POPO studies positive-only policy optimization with implicit negative gradients~\cite{xu2026positive}.
Recent work examines reference-induced shortcuts~\cite{shen2026purified}, reasoning degradation~\cite{kaur2026rethinking}, reduced output diversity~\cite{nicolicioiu2026diversity}, and sensitivity to reasoning modes and teacher prefixes~\cite{li2026diagnosing} in OPSD. Broader OP(S)D diagnostics identify supervision mismatch and objective-dependent instability~\cite{zhu2026many}.

\subsection{Asynchronous Reinforcement Learning} 
Decoupling rollout generation from policy optimization provides the foundation for scalable distributed RL \cite{mnih2016asynchronous, espeholt2018impala}. In LLM post-training, asynchronous architectures maximize accelerator utilization through pipeline overlapping \cite{noukhovitch2025faster}, worker load balancing \cite{fu2026areal}, disaggregated streaming generation \cite{zhong2025streamrl}, and dynamic trajectory repacking \cite{sheng2026laminar}. To mitigate distribution mismatch caused by policy lag, algorithmic solutions employ proximal interpolation \cite{li20253po} and second-moment importance-weight bounding \cite{zheng2026prosperity}. However, while RL relies on scalar feedback, distillation depends on dense prefix-conditioned teacher distributions, creating distinct staleness dynamics.

\subsection{Asynchronous On-Policy Distillation} 
Asynchronous OPD introduces coupled systems and distribution bottlenecks when integrating teacher evaluation into concurrent pipelines. Systems frameworks address these bottlenecks via pipelined scheduling \cite{sheng2025hybridflow} and distributed hidden-state streaming to reduce vocabulary communication \cite{zhang2026kdflow}. Algorithmically, AsyncOPD \cite{kang2026asyncopd} analyzes staleness under cached teacher scores, demonstrating divergence sensitivities and the inherent limits of action reweighting. Extending this, f-OPD \cite{chen2026boldsymbol} separates policy drift from supervision drift using freshness weighting and adaptive refreshes and DivOPD \cite{wang2026divopd} prioritizes high-disagreement turns across stale queues. 

However, asynchronous on-policy self-distillation is still underexplored. To address this gap, we introduce AsyncOPSD, which overlaps rollout generation and learner updates under a fixed GPU budget and simplifies distributed coordination by recomputing privileged teacher supervision directly within the learner.

\section{Preliminaries}

For next-token distributions $p$ and $q$ over vocabulary $\mathcal V$, define $D_{\mathrm{KL}}(p\|q)=\sum_{v\in\mathcal V}p(v)\log\frac{p(v)}{q(v)}$. 

\paragraph{On-Policy Distillation.}
On-policy distillation (OPD)~\cite{li2026rethinking} trains a student $p_S$ with parameters $\theta$ using a separate, fixed teacher $p_T$. Given a prompt $x\sim\mathcal{S}_x$, where $\mathcal{S}_x$ is the prompt dataset, the student samples $\hat y\sim p_S(\cdot\mid x)$. Both models evaluate the same generated prefixes, yielding $p_{S,n}=p_S(\cdot\mid x,\hat y_{<n})$ and $p_{T,n}=p_T(\cdot\mid x,\hat y_{<n})$. The objective is
\begin{equation}
\mathcal{L}_{\mathrm{OPD}}(\theta)
=\mathbb{E}_{\substack{x\sim\mathcal{S}_x\\\hat y\sim p_S(\cdot\mid x)}}\!\left[
\frac{1}{|\hat y|}\sum_{n=1}^{|\hat y|}
D(p_{T,n}\|p_{S,n})\right]\nonumber
\end{equation}
commonly instantiated with reverse KL. The teacher remains fixed during training.

\paragraph{On-Policy Self-Distillation.}
On-policy self-distillation (OPSD)~\cite{zhao2026self} uses a single LLM as both teacher and student by conditioning on asymmetric prompt contexts: \textbf{Student Policy $p_S(\cdot \mid x)$:} Observes only the original problem prompt $x$, faithfully replicating inference conditions. \textbf{Teacher Policy $p_T(\cdot \mid x, y^\star)$:} Receives privileged context consisting of the question $x$ paired with verified reference reasoning, ground-truth answers $y^\star$ or any other useful feedback.

Given problem-solution pairs $(x, y^\star)$ from a reasoning dataset $\mathcal{S}$, the student samples $\hat{y} \sim p_S(\cdot \mid x)$. Using the divergence $D$ and setting $p_{S,n}=p_S(\cdot\mid x,\hat y_{<n})$ and $p_{T,n}=p_T(\cdot\mid x,y^\star,\hat y_{<n})$, the objective becomes
\begin{equation}
\mathcal{L}_{\mathrm{OPSD}}(\theta)
=\mathbb{E}_{\substack{(x,y^\star)\sim\mathcal S\\\hat y\sim p_S(\cdot\mid x)}}\!\left[
\frac{1}{|\hat y|}\sum_{n=1}^{|\hat y|}
D(p_{T,n}\|p_{S,n})\right] \nonumber
\end{equation}
Gradients are backpropagated solely through the student policy $p_S$, while the privileged teacher $p_T$ acts as a stop-gradient target.

\section{AsyncOPSD: Framework, Analysis, and Method}
\label{sec:asyncopsd_method}
\subsection{Asynchronous Training Framework}
Extending On-Policy Self-Distillation (OPSD), we implement a baseline asynchronous variant (AsyncOPSD) by decoupling trajectory rollout generation from policy optimization across disjoint GPU pools. The GPU cluster is partitioned into the learner pool and the rollout actor pool. 
The learners execute forward passes, compute gradients, and optimize the model with the batch in the queue from rollout actors and periodically export the current policy (model weights).
Concurrently, in the rollout actor pool, distributed rollout actors host independent, single-GPU vLLM instances to sample after loading the refreshed policy, pushing generated trajectories into a shared rollout queue.  

Consistent with standard OPSD, no auxiliary teacher model is required; the student is optimized using a clipped forward-KL distillation objective. Model export frequency is governed by \texttt{sync\_step}, enabling generation and gradient computation to overlap seamlessly at the cost of bounded policy staleness. To prevent memory overload and control queue delays, the system limits queue size based on learner microbatches.

\subsection{Staleness--Entropy Analysis}
\label{sec:async_opsd_theory}

\ifcsname aopsdlemma\endcsname\else
  \newtheorem{aopsdlemma}{Lemma}
\fi
\ifcsname aopsdassumption\endcsname\else
  \newtheorem{aopsdassumption}{Assumption}
\fi
\ifcsname aopsdtheorem\endcsname\else
  \newtheorem{aopsdtheorem}{Theorem}
\fi

Asynchronous training potentially degrades performance. We provide an analysis relating this degradation to student rollout staleness and the privileged teacher entropy.
We first analyze the evolving shared-teacher formulation, in which privileged teacher and student predictions use the current parameters $\theta_k$. For the fair comparison in \Cref{sec:asyncopsd_setup}, we freeze the teacher policy to be the initial policy following the OPSD protocol~\cite{zhao2026self}. Appendix~\ref{app:aopsd_frozen_teacher} establishes the same form of staleness--entropy bound.

\paragraph{Shared-model formulation.}
Let $k$ denote the learner update and $r\le k$ the policy version used
for rollout generation. A training example consists of a problem $x$
and privileged information $a$, such as a verified answer.
For a response prefix $y_{<h}$, 
\begin{equation}
\begin{aligned}
s_h &= (x,a,h,y_{<h}),\\
p_k(\cdot\mid s_h)
  &= \pi_{\theta_k}(\cdot\mid x,y_{<h}),\\
q_k(\cdot\mid s_h)
  &= \pi_{\theta_k}(\cdot\mid x,a,y_{<h}).
\end{aligned} \nonumber
\end{equation}
Teacher and student share the same model. The teacher output is detached during optimization. The per-prefix forward-KL loss is
\begin{equation}
F_k(s)
=
\operatorname{KL}\!\left(
\operatorname{sg}[q_k(\cdot\mid s)]
\,\middle\|\,
p_k(\cdot\mid s)
\right) \nonumber
\label{eq:aopsd_token_loss}
\end{equation}
where $\operatorname{sg}$ denotes stop-gradient. At a fixed $k$,
stop-gradient affects differentiation but not the numerical value of
$F_k$.
Let $\mu_r$ and $\mu_k$ be prefix distributions induced by the rollout
and current policies, respectively. Both use the same distribution of
$(x,a)$ and the same fixed distribution over token positions
$h\in\{1,\ldots,T\}$. A finite horizon and a common continuation
convention after termination make these distributions well-defined. We consider a fixed rollout version $r$.
The asynchronous and synchronous population objectives are
\begin{equation}
\begin{aligned}
J_{\mathrm{async}}^k &= \mathbb{E}_{s\sim\mu_r}[F_k(s)],\\
J_{\mathrm{sync}}^k  &= \mathbb{E}_{s\sim\mu_k}[F_k(s)],\\
\Delta_k &= \left|J_{\mathrm{async}}^k-J_{\mathrm{sync}}^k\right|.
\end{aligned}
\label{eq:aopsd_objectives}
\end{equation}
This formulation follows the objective-discrepancy perspective of $f$-OPD~\cite{chen2026boldsymbol}, while specializing the supervision protocol. Both learner-side distributions are recomputed at $\theta_k$ on the stored prefix. 


The following theorem bounds the discrepancy between asynchronous and synchronous training objectives in terms of prefix staleness and teacher entropy, under the assumptions in Appendix~\ref{app:aopsd_theory}.

\begin{aopsdtheorem}[Joint staleness--entropy bound]
\label{thm:aopsd_discrepancy}
Fix a learner update $k$ and rollout version $r\le k$, with the prefix
laws and objectives in Eq.~\eqref{eq:aopsd_objectives}. Suppose
Assumptions~\ref{ass:aopsd_sampling} and~\ref{ass:aopsd_contrast}
hold, with $B\ge0$ the uniform privileged logit contrast bound in
Assumption~\ref{ass:aopsd_contrast}.
Define the prefix likelihood ratio, its staleness measure, and the
current teacher entropy by
\begin{equation}
\begin{aligned}
\rho_{k,r}(s_h)
&:=\frac{\mu_k(s_h)}{\mu_r(s_h)}
=\prod_{u<h}\frac{p_k(y_u\mid s_u)}{p_r(y_u\mid s_u)},\\
d_{k,r}(s)&:=|1-\rho_{k,r}(s)|,\\
H_k(s)&:=-\sum_{v\in\mathcal V}q_k(v\mid s)\log q_k(v\mid s),\\
C_B&:=\frac{B(e^B-1)}{2}.
\end{aligned}
\label{eq:aopsd_entropy}
\end{equation}
Here $p_r$ is the rollout-version student policy under the common
sampling convention; the product is one when $h=1$, and all logarithms
are natural. The current per-prefix loss satisfies
\begin{equation}
0\le F_k(s)\le C_B(1-e^{-H_k(s)})\le C_BH_k(s).
\label{eq:aopsd_loss_entropy}
\end{equation}
Consequently, the asynchronous objective discrepancy obeys
\begin{equation}
\begin{aligned}
\Delta_k
&\le C_B\,\mathbb E_{s\sim\mu_r}
\!\left[
\underbrace{d_{k,r}(s)}_{\substack{\text{prefix}\\\text{staleness}}}
\,
\underbrace{(1-e^{-H_k(s)})}_{\substack{\text{teacher}\\\text{entropy factor}}}
\right]\\
&\le C_B\,\mathbb E_{s\sim\mu_r}
\!\left[d_{k,r}(s)H_k(s)\right].
\end{aligned}
\label{eq:aopsd_main_bound}
\end{equation}
\end{aopsdtheorem}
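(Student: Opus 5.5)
The plan is to prove the per-prefix entropy bound \eqref{eq:aopsd_loss_entropy} first, and then obtain \eqref{eq:aopsd_main_bound} by a change of measure. The second part should be routine.

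\textbf{Change of measure.} Assumption~\ref{ass:aopsd_sampling} presumably guarantees that $\mu_k\ll\mu_r$, i.e.\ $p_r(y_u\mid s_u)>0$ wherever $p_k$ puts mass under the common sampling convention. Then the autoregressive factorisation gives $\rho_{k,r}(s_h)=\prod_{u<h}p_k(y_u\mid s_u)/p_r(y_u\mid s_u)$. The $(x,a)$ law and the position law cancel because they are shared. Since the state space is finite (finite horizon, finite vocabulary), we may write
\[
J^k_{\mathrm{sync}}=\mathbb E_{s\sim\mu_r}[\rho_{k,r}(s)F_k(s)],
\qquad\text{so}\qquad
\Delta_k=\bigl|\mathbb E_{\mu_r}[(1-\rho_{k,r}(s))F_k(s)]\bigr|\le \mathbb E_{\mu_r}[d_{k,r}(s)F_k(s)],
\]
using $F_k\ge0$. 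Inserting \eqref{eq:aopsd_loss_entropy} pointwise gives both lines of \eqref{eq:aopsd_main_bound}. The last inequality is just $1-e^{-H}\le H$.

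\textbf{Per-prefix bound: setup.} I read Assumption~\ref{ass:aopsd_contrast} as saying the privileged and unprivileged logits differ by a contrast $\delta(v)$ with range in an interval of length $B$. After absorbing a constant into normalisation, this means $q(v)=p(v)e^{\delta(v)}/Z$ with $Z=\mathbb E_p e^{\delta}$ and $\delta\in[0,B]$. Nonnegativity $F_k\ge0$ is Gibbs' inequality. For the upper bound:
\[
\operatorname{KL}(q\|p)=\mathbb E_q\delta-\log\mathbb E_pe^{\delta}\le \mathbb E_q\delta-\mathbb E_p\delta=\sum_v(q(v)-p(v))\,\delta(v)\le B\,\mathrm{TV}(q,p),
\]
where the first inequality is Jensen. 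The last step uses that $\sum_v(q-p)=0$, so $\delta$ may be shifted to lie in $[0,B]$ and only the positive part $\sum_v(q-p)_+$ contributes.

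\textbf{Per-prefix bound: the main obstacle.} The hard step is to show
\[
\mathrm{TV}(q,p)\le \tfrac{e^B-1}{2}\,(1-e^{-H(q)}).
\]
First, $H(q)\ge-\log\max_v q(v)$, so $e^{-H}\le q(v^\star)$ with $v^\star=\arg\max q$. Hence it suffices to bound $\mathrm{TV}$ by $\tfrac{e^B-1}{2}\,\varepsilon$, where $\varepsilon=1-q(v^\star)$. The intuition is that bounded contrast forces $p$ to be concentrated whenever $q$ is. The ratio $p(v)/q(v)=Z e^{-\delta(v)}$ lies in an interval $[c,ce^B]$. I would write
\[
2\,\mathrm{TV}=|q(v^\star)-p(v^\star)|+\sum_{v\ne v^\star}|q(v)-p(v)|,
\]
and note that the first term equals $\bigl|\sum_{v\ne v^\star}(p(v)-q(v))\bigr|$. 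Each off-peak term satisfies $|p(v)-q(v)|\le q(v)\max(|c-1|,|ce^B-1|)$, and normalisation forces $c\le1\le ce^B$. This should give something like $2\,\mathrm{TV}\le 2(e^B-1)\varepsilon$ or better. Obtaining exactly the factor $\tfrac12$ in $C_B$ may require a sharper two-case argument on whether $c\,e^{B}\gtrless 1$ at $v^\star$. Alternatively, one could use the bound $\max_v|\log(q(v)/p(v))|\le B$ together with the standard ratio inequality $|q-p|\le(e^B-1)\min(p,q)$, summed off the mode. I expect the constant bookkeeping here to be the main difficulty. If the factor $\tfrac12$ does not fall out, I would revisit exactly how Assumption~\ref{ass:aopsd_contrast} normalises $B$, for instance as a range bound versus an absolute bound.
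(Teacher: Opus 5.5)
Your change-of-measure step and the inequality $\operatorname{KL}(q\|p)\le\sum_v(q_v-p_v)\delta_v\le B\,\mathrm{TV}(q,p)$ are correct and match the paper. Your Jensen step is equivalent to the paper's step of adding the nonnegative reverse KL so that $\log Z$ cancels. The gap is in the total-variation step.

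Passing through $e^{-H(q)}\le\max_v q_v$ leaves you needing $\mathrm{TV}(p,q)\le\frac{e^B-1}{2}(1-\max_v q_v)$. This inequality is false, so no sharper two-case analysis can deliver it. Take $q=(1-\varepsilon,\varepsilon)$ and $p=\bigl(1-\varepsilon,\varepsilon e^B\bigr)/(1+\varepsilon(e^B-1))$; their logit contrast has oscillation exactly $B$. Then $\mathrm{TV}(p,q)=\varepsilon(1-\varepsilon)(e^B-1)/(1+\varepsilon(e^B-1))$, so $\mathrm{TV}/\varepsilon\to e^B-1$ as $\varepsilon\to0$, which is twice your target.

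Your per-term ratio argument, and the $|q-p|\le(e^B-1)\min(p,q)$ variant, only give $\mathrm{TV}\le(e^B-1)(1-\max_v q_v)$. That yields the theorem with $2C_B$ in place of $C_B$. Renormalising $B$ will not help, since you already use the oscillation bound exactly as Assumption~\ref{ass:aopsd_contrast} states it.

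The missing idea is to replace $1-\max_v q_v$ by the collision quantity $1-\sum_v q_v^2$. It can be up to twice as large, and it is still dominated by $1-e^{-H(q)}$ by Jensen, since $H(q)\ge-\log\sum_v q_v^2$. Write $p$ as a tilt of $q$, not $q$ as a tilt of $p$; the latter would produce $\sum_v p_v^2$, a student quantity rather than the teacher's. Concretely, set $p_v=q_ve^{\delta_v}/Z$ with $\delta_v=\max_u b_u-b_v\in[0,B]$ and $Z=\sum_u q_ue^{\delta_u}\ge1$. Then $e^{\delta_v}-Z=\sum_u q_u(e^{\delta_v}-e^{\delta_u})$, and using $Z\ge1$ and the vanishing diagonal terms,
\[
\mathrm{TV}(p,q)=\frac{1}{2Z}\sum_v q_v\bigl|e^{\delta_v}-Z\bigr|\le\frac12\sum_{u\neq v}q_uq_v\bigl|e^{\delta_v}-e^{\delta_u}\bigr|\le\frac{e^B-1}{2}\Bigl(1-\sum_v q_v^2\Bigr).
\]
The factor $\frac12$ you were missing comes from this symmetric sum over ordered pairs. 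In the two-point example above, $1-\sum_v q_v^2=2\varepsilon(1-\varepsilon)$, and the bound is asymptotically tight. Combining it with your $\operatorname{KL}\le B\,\mathrm{TV}$ gives $F_k\le C_B(1-\sum_v q_v^2)\le C_B(1-e^{-H_k})$. Your change-of-measure argument then finishes the proof exactly as in the paper.
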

The supporting assumptions, lemmas, and full proof are provided in
Appendix~\ref{app:aopsd_theory}.
\paragraph{Discussion.}
The bound motivates two complementary controls on prefix staleness and teacher entropy. 
The required signals are already available during training. We use the policy-version gap $k-r$ as an inexpensive proxy for prefix-distribution staleness, while teacher entropy is available from the existing teacher forward pass. Their joint use therefore requires no additional forward pass.




\subsection{Staleness Control and Entropy Weighting}
\label{sec:asyncopsd_controls}

\paragraph{Staleness-Aware Policy Control.}

Let $k$ be the current learner optimizer step and $r_i$ the step of the
published policy that generated sample $i$. Its staleness is
$s_i=\max\{0, k - r_i\}$. We reject rollouts with a staleness $s_i$ greater than $K$ using the filter:$$g_K(\tau_i) = \mathbf{1}\{s_i \le K\}$$
where $\mathbf{1}\{\cdot\}$ is the indicator function. The learner rejects staleness above $K$ when claiming a queued rollout batch, fetching another batch until an admissible one is available. Generation and learning continue asynchronously.
We set $K=6$ based on the mean staleness in \Cref{tab:asyncopsd_sync_interval} and an ablation study in \Cref{sec:asyncopsd_ablation}.
Policy version gap is an inexpensive proxy for distribution drift.

\paragraph{Token-Level Entropy Weighting.}
For an accepted completion token $h$ of rollout $i$, write $p_{i,h}$ for
the current student distribution and $q_{i,h}$ for the privileged
fixed-teacher distribution. Both use temperature-scaled logits. Define teacher entropy: 
\begin{equation}
    \begin{aligned}
    H_{i,h}&=-\sum_{v\in\mathcal V}q_{i,h}(v)\log q_{i,h}(v),\\
    u_{i,h}&=\exp(-\beta_H H_{i,h}),\qquad\beta_H\ge0. \nonumber
    \end{aligned}
\end{equation}
Over all valid accepted completion positions $\mathcal A$ in each local
learner microbatch, we normalize
\begin{equation}
    w_{i,h}
    = \frac{u_{i,h}}
    {\frac{1}{|\mathcal A|}
    \sum_{(j,m)\in\mathcal A} u_{j,m}} \nonumber
\end{equation}
These weights reduce high-entropy tokens' relative contribution and have
mean one when the numerical floor is inactive. They reuse the teacher
probabilities already computed for distillation. Setting $\beta_H=0$ recovers uniform weights. We retain per-token pointwise clipping in all conditions:
\begin{equation}
    \ell_c(q,p)=\sum_{v\in\mathcal V}
    \min\!\left\{q(v)\log\frac{q(v)}{p(v)},c\right\} \nonumber
    \label{eq:asyncopsd_clipped_loss}
\end{equation}
where $c\ge0$ is the clipping threshold. Combining staleness filtering and token-level entropy weighting with this clipped loss gives the following objective, $\mathcal C$ is the set of candidate rollouts and $T_i$ is the valid completion length of rollout $i$. For rejected rollouts, set $w_{i,h}=0$ so that all terms below are defined:

\begin{equation}
    \mathcal L_k=
    \frac{\displaystyle\sum_{i\in\mathcal C}g_K(\tau_i)
      \sum_{h=1}^{T_i}\operatorname{sg}[w_{i,h}]
      \ell_c(\operatorname{sg}[q_{i,h}],p_{i,h})}
    {\displaystyle\sum_{i\in\mathcal C}g_K(\tau_i)T_i}
    \label{eq:asyncopsd_combined_loss}
\end{equation}
where $\operatorname{sg}$ denotes stop-gradient and normalization is over
accepted tokens. The learner waits for accepted data if the denominator
would be zero. The controls are motivated by the analysis, but do not
constitute an exact correction of stale sampling. 
Entropy weighting changes the relative importance of tokens without changing the teacher target distribution, unlike the PMI target construction of Purified OPSD~\cite{shen2026purified}. High-entropy positions can also support useful reasoning branches~\cite{kaur2026rethinking}; downweighting them is a heuristic whose effect on reasoning diversity requires separate evaluation.

\section{Experiments}
\label{sec:asyncopsd_experiments}
We study the throughput--quality tradeoff of asynchronous on-policy self-distillation. 

\subsection{Experimental Setup}
\label{sec:asyncopsd_setup}

\paragraph{Models, data, and evaluation.}
Following OPSD~\cite{zhao2026self}, we consider Qwen3-1.7B, Qwen3-4B, and Qwen3-8B, using the instruct-tuned versions. For training data, we use the mathematical reasoning subset of OpenThoughts \cite{guha2026openthoughts}. We evaluate on AIME 2024, AIME 2025, and HMMT 2025 and report Avg@12. The details of the evaluation are shown  in \Cref{tab:evaluation_config}.

\paragraph{Implementation details.}
As in \cite{zhao2026self}, we use forward KL with full-vocabulary logit distillation with per-token pointwise KL clipping and the pointwise divergence clipping defined in Section~\ref{sec:asyncopsd_controls}. Following the OPSD protocol~\cite{zhao2026self}, we freeze the teacher at the initial model in all synchronous and asynchronous conditions to keep the teacher strategy matched across the empirical comparisons. The student receives the question in Thinking Mode (TM)-off, while the TM-on teacher additionally receives the reference solution. 

All runs use eight NVIDIA A100 GPUs, rank-64 LoRA with bfloat16 precision, gradient checkpointing, and FlashAttention-2 for memory efficiency.  Asynchronous execution allocates four GPUs to rollout actors and four to learners, with an effective global batch of 32. Synchronous OPSD uses all eight GPUs for alternating generation and optimization with the same effective global batch of 32. The training configuration is listed in Table~\ref{tab:trainig_config}. We repeat runs 3 times to get the mean and standard deviation to measure the throughput (samples/s) and the staleness.

\begin{table}[t]
\centering
\small
\setlength{\tabcolsep}{5pt}
\begin{tabular}{llrr}
\toprule
Model & \makecell{Sync\\Step} & Samples/s $\uparrow$ & \makecell{Mean Staleness\\$\downarrow$} \\
\midrule
Qwen3-1.7B & $1$ & $8.95 \pm 0.34$ & $4.98 \pm 0.08$ \\
  & \textbf{2} & $9.22 \pm 0.41$ & $5.35 \pm 0.28$ \\
  & $4$ & $9.41 \pm 0.49$ & $6.08 \pm 0.37$ \\
  & $8$ & $8.63 \pm 0.21$ & $7.93 \pm 0.16$ \\
\midrule
Qwen3-4B & $1$ & $5.23 \pm 0.22$ & $6.09 \pm 0.29$ \\
  & \textbf{2} & $5.53 \pm 0.46$ & $6.66 \pm 0.37$ \\
  & $4$ & $5.62 \pm 0.54$ & $7.79 \pm 0.43$ \\
  & $8$ & $5.02 \pm 0.23$ & $9.38 \pm 0.32$ \\
\midrule
Qwen3-8B & $1$ & $3.51 \pm 0.39$ & $4.44 \pm 0.28$ \\
  & \textbf{2} & $3.61 \pm 0.35$ & $5.03 \pm 0.21$ \\
  & $4$ & $3.57 \pm 0.30$ & $5.91 \pm 0.19$ \\
  & $8$ & $3.35 \pm 0.22$ & $7.88 \pm 0.29$ \\
\bottomrule
\end{tabular}
\caption{Effect of \texttt{sync\_step}. No max staleness limitation and no entropy weighting. Bold identifies selected interval.}
\label{tab:asyncopsd_sync_interval}
\end{table}

\paragraph{Efficiency metrics.}
We report accepted-training-sample throughput,
$R_{\mathrm{sample}}=\frac{N_{\mathrm{accepted}}}{T_{\mathrm{train}}}$,
where accepted samples are sequence occurrences actually consumed for optimization, and $T_{\mathrm{train}}$ is the Trainer-measured training interval. Mean staleness uses $s_i$ from accepted rollout samples.
Relative throughput is normalized to the
recorded synchronous run for the same model, following the reporting
style of asynchronous distillation studies~\cite{chen2026boldsymbol,kang2026asyncopd}.
We use sequences rather than response tokens, so our values are not numerically comparable to token-throughput results in AsyncOPD.


\subsection{Policy Export Frequency}
\label{sec:asyncopsd_sync_interval}
The \texttt{sync\_step} specifies the learner policy export frequency and indirectly changes the mean staleness on the learner side. We compare $ \texttt{sync\_step} \in\{1,2,4,8\}$ at a fixed 4/4 learner/actor allocation and
effective batch 32, with policy control and entropy weighting disabled. 

Table~\ref{tab:asyncopsd_sync_interval} shows the trade-off between throughput and staleness. Mean accepted-rollout
staleness increases monotonically with \texttt{sync\_step} for each model, whereas throughput
does not. Changing \texttt{sync\_step} from two to four increases mean throughput by only
2.0\% and 1.6\% for 1.7B and 4B and reduces it by 1.1\% for 8B; mean staleness
increases by 13.6\%, 17.0\%, and 17.5\%, respectively. 
To get a high throughput without excessively sacrificing staleness, we select $\texttt{sync\_step}=2$ as a common throughput--freshness compromise in all experiments later. 

\begin{table*}[t]
\centering
\small
\setlength{\tabcolsep}{5pt}
\begin{tabular}{llrrrr}
\toprule
Model & Method & AIME24 & AIME25 & HMMT25 & Average \\
\midrule
Qwen3-8B & OPSD \cite{zhao2026self} & \textbf{77.8} & 70.8 & 45.8 & 64.80 \\
 & Vanilla AsyncOPSD & 76.9 & 70.0 & 45.8 & 64.23 \\
 & + Staleness Control & 77.5 & 71.1 & 46.1 & 64.90 \\
 & AsyncOPSD & \textbf{77.8} & \textbf{71.7} & \textbf{47.5} & \textbf{65.67} \\
\midrule
Qwen3-4B & OPSD \cite{zhao2026self}& 76.4 & 68.3 & \textbf{46.1} & 63.60 \\
 & Vanilla AsyncOPSD & 75.0 & 67.5 & 44.4 & 62.30 \\
 & + Staleness Control & 76.1 & 68.1 & \textbf{46.1} & 63.43 \\
 & AsyncOPSD & \textbf{78.3} & \textbf{69.7} & \textbf{46.1} & \textbf{64.70} \\
\midrule
Qwen3-1.7B & OPSD \cite{zhao2026self}& 57.2 & \textbf{43.9} & 29.2 & 43.43 \\
 & Vanilla AsyncOPSD & 57.2 & 42.7 & 29.0 & 42.97 \\
 & + Staleness Control & 57.8 & 42.0 & \textbf{29.4} & 43.07 \\
 & AsyncOPSD & \textbf{58.3} & \textbf{43.9} & 29.2 & \textbf{43.80} \\
\bottomrule
\end{tabular}
\caption{We evaluate the mathematical reasoning performance of Qwen3 models using Avg@12, following the sampling settings recommended in the Qwen3 blog (temperature 1.0, maximum generation length 38k). Full benchmark details are provided in \Cref{tab:evaluation_config}.}
\label{tab:asyncopsd_accuracy}
\end{table*}

\begin{table*}[th!]
\centering
\small
\setlength{\tabcolsep}{5pt}
\begin{tabular}{llrrr}
\toprule
Model & Method & Samples/s $\uparrow$ & Relative throughput & Mean Staleness \\
\midrule
Qwen3-1.7B & Synchronous OPSD & $4.07 \pm 0.32$ & 1.00 & - \\
         & Vanilla AsyncOPSD & $9.22 \pm 0.41$ & 2.26 & $5.35 \pm 0.28$\\
         & + Staleness Control & $8.20 \pm 0.39$ & 2.01 & $4.73 \pm 0.22$ \\
         & AsyncOPSD & $8.00 \pm 0.45$ & 1.96 & $4.61 \pm 0.32$ \\
\midrule
Qwen3-4B & Synchronous OPSD & $2.40 \pm 0.33$ & 1.00 & - \\
         & Vanilla AsyncOPSD & $5.53 \pm 0.46$ & 2.31 & $6.66 \pm 0.37$\\
         & + Staleness Control & $5.46 \pm 0.35$ & 2.28 & $5.09 \pm 0.33$ \\
         & AsyncOPSD & $5.39 \pm 0.31$ & 2.25 & $5.18 \pm 0.20$ \\
\midrule
Qwen3-8B & Synchronous OPSD & $1.70 \pm 0.32$ & 1.00 & - \\
         & Vanilla AsyncOPSD & $3.61 \pm 0.35$ & 2.12 & $5.03 \pm 0.21$ \\
         & + Staleness Control & $3.43 \pm 0.39$ & 2.01 & $4.59 \pm 0.26$ \\
         & AsyncOPSD & $3.40 \pm 0.36$ & 1.99 & $4.59 \pm 0.24$ \\
\bottomrule
\end{tabular}
\caption{Accepted-training-sample throughput on eight A100s. Relative throughput is the ratio of the asynchronous and synchronous mean sample rates, computed before rounding. Synchronous rollouts have zero staleness. The comparison measures the complete implementations, including their batching and policy-refresh strategies.}
\label{tab:asyncopsd_throughput}
\end{table*}

\subsection{Vanilla Asynchronous OPSD}
\label{sec:asyncopsd_vanilla}

Vanilla AsyncOPSD decouples generation from optimization. Actors generate with refreshed student policies and place rollouts in a shared queue; learners consume them, perform model training, and periodically export the policy.
Only student LoRA adapters are published, avoiding full-model reloads on actors. 

Table~\ref{tab:asyncopsd_accuracy} reports the complete accuracy comparison, including the controlled variants discussed next. Vanilla asynchronous execution reduces the cross-benchmark average from 43.43 to 42.97 on
1.7B, from 63.60 to 62.30 on 4B, and from 64.80 to 64.23 on 8B. The decline is largest on 4B; it is not uniform across
individual benchmarks. These results motivate controlling the optimization mismatch introduced by stale rollouts. The analysis in \ref{sec:async_opsd_theory} provides a complementary motivation by identifying how prefix-distribution drift and teacher entropy interact in the objective discrepancy.

\subsection{AsyncOPSD Ablation}
\label{sec:asyncopsd_controls_ablation}

We evaluate the strategies defined in Section~\ref{sec:asyncopsd_controls}. We disable the staleness limit and entropy weighting in vanilla AsyncOPSD. The Staleness Control baseline uses $K=6$; Section~\ref{sec:asyncopsd_ablation} examines its throughput--freshness tradeoff. Full AsyncOPSD uses $K=6$ and $\beta_H=0.1$. Setting $\beta_H=0$ disables entropy weighting. The clipping threshold is $c=0.05$ for 1.7B/4B and $c=0.06$ for 8B.

\paragraph{Results comparison.}
Table~\ref{tab:asyncopsd_accuracy} compares OPSD with the asynchronous variants. We select \texttt{sync\_step}=2 to balance throughput and staleness.
Staleness control alone raises the average over vanilla AsyncOPSD by 0.10, 1.13, and 0.67 percentage points on 1.7B, 4B, and 8B.
Adding entropy weighting yields further gains of 0.73, 1.27, and 0.77 percentage points. AsyncOPSD reaches averages of 43.80, 64.70, and 65.67, exceeding OPSD by 0.37, 1.10, and 0.87 percentage points, respectively. Improvements are aggregate: for example, 1.7B HMMT25 slightly favors staleness control
alone. This ablation measures the incremental benefit of entropy weighting with staleness control fixed at $K=6$. 

\subsection{Training Throughput}
\label{sec:asyncopsd_speed}

\paragraph{Synchronous baseline.}
To get a stronger baseline, we decouple synchronous generation batch size from training microbatch size to avoid unnecessarily restricting inference throughput. All three synchronous models generate 4 prompts per GPU per vLLM call. For 1.7B and 4B, the training microbatch is also 4, with one gradient-accumulation step. For 8B, we retain a training microbatch of 2 and 2 gradient-accumulation steps, while generating both microbatches together. This reduces the number of generation calls per GPU per optimizer update for 8B model from two to one without increasing the forward/backward microbatch. The generated responses are consumed within the same optimizer update, preserving synchronous execution and an effective global batch of 32.

\paragraph{Throughput and quality tradeoff.}
Table~\ref{tab:asyncopsd_throughput} compares the updated synchronous baseline
with all three asynchronous conditions.
Synchronous OPSD achieves $4.07\pm0.32$, $2.40\pm0.33$, and $1.70\pm0.32$ samples/s on 1.7B, 4B, and 8B, respectively.
Vanilla AsyncOPSD achieves 9.22, 5.53, and 3.61 samples/s, while AsyncOPSD achieves 8.00, 5.39, and 3.40 samples/s. Relative to the updated synchronous means, vanilla execution improves throughput by factors of 2.26, 2.31, and 2.12; staleness control alone achieves 2.01, 2.28, and 2.01 times the synchronous throughput. AsyncOPSD achieves \textbf{1.96$\times$, 2.25$\times$, and 1.99$\times$ throughput} with reported average accuracy gains over OPSD of 0.37, 1.10, and 0.87 percentage points on 1.7B, 4B, and 8B, respectively. It retains 86.8\%, 97.5\%, and 94.2\% of vanilla throughput. Relative to staleness control alone, entropy weighting changes mean throughput by $-2.4$\%, $-1.3$\%, and $-0.9$\%.



\paragraph{Sources of throughput improvement.}
The asynchronous implementation combines several systems changes.
1) Larger generation batches: Each rollout actor generates 32 completions per vLLM call, compared with four completions per GPU in the synchronous baseline. Larger batches can improve inference efficiency through higher GPU utilization.
2) Lower policy-refresh overhead. The synchronous implementation refreshes vLLM by merging the LoRA adapters every step. The asynchronous implementation instead exports only the updated LoRA adapters at a configurable $\texttt{sync\_step}$.
3) Nonlinear learner scaling. Doubling the number of learner GPUs does not necessarily halve optimization time. Communication, synchronization, and small per-device workloads can limit scaling efficiency. Consequently, splitting GPUs into a learner pool and rollout actors pool can improve overall utilization despite using fewer GPUs for optimization.
4) Overlapping generation and optimization. Rollout actors generate upcoming batches while learners optimize previously generated samples. This overlap reduces idle periods associated with alternating generation and optimization in synchronous execution.

\begin{figure}[t]
\centering
\begin{subfigure}[t]{0.36\textwidth}
\centering
\includegraphics[width=\linewidth]{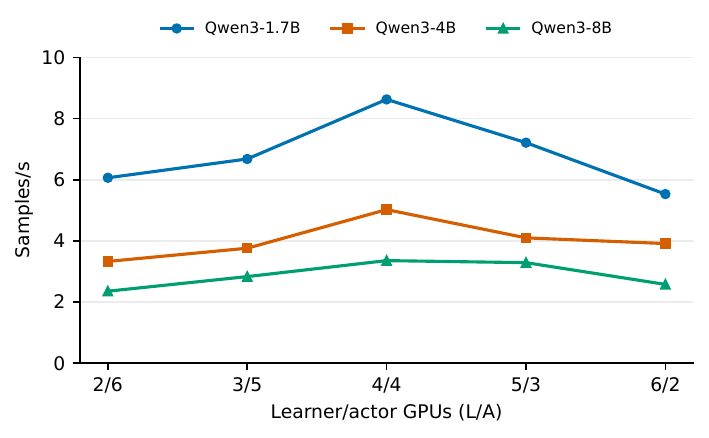}
\caption{Accepted-training-sample throughput.}
\label{fig:asyncopsd_gpu_allocation_throughput}
\end{subfigure}
\hfill
\begin{subfigure}[t]{0.36\textwidth}
\centering
\includegraphics[width=\linewidth]{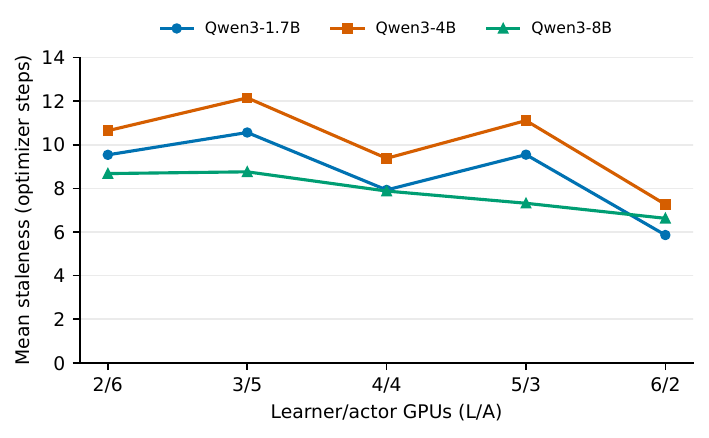}
\caption{Mean accepted-rollout staleness.}
\label{fig:asyncopsd_gpu_allocation_staleness}
\end{subfigure}
\caption{GPU-allocation ablation on eight A100s. L/A denotes learner/actor GPUs. 
(a) shows mean accepted-training-sample throughput; panel (b) averages mean rollout staleness in learner optimizer steps. The sweep uses $\texttt{sync\_step}=8$ vanilla AsyncOPSD.
}
\vspace{-0.2in}
\label{fig:asyncopsd_gpu_allocation}
\end{figure}

\vspace{-0.1in}
\subsection{Ablation Study}
\label{sec:asyncopsd_ablation}

\paragraph{GPU-Allocation Ablation.}
\Cref{fig:asyncopsd_gpu_allocation} shows the learner/actor allocation over 2/6, 3/5, 4/4, 5/3, and 6/2
on eight A100s with vanilla asynchronous execution.
The balanced 4/4 allocation provides the highest mean sample throughput for each model: 8.63, 5.02, and 3.35 samples/s for 1.7B, 4B, and 8B. 
Allocating more GPUs to either generation or learning alone is insufficient: the two stages must supply and consume samples at compatible rates. We adopt 4/4 as a common operating point.


\paragraph{Staleness-Cutoff Ablation.}
We sweep $K\in\{4,6,8,10\}$ on Qwen3-1.7B with $\beta_H=0$. Throughput relative to the sweep's $K=6$ mean is 0.755, 1.000, 1.114, and 1.108, respectively. Increasing $K$ from 4 to 6 improves throughput by 32.4\% and reduces rejection among claimed rollouts from 33.9\% to 10.7\%. Compared with $K=8$, $K=6$ retains 89.8\% of throughput while reducing mean accepted-rollout staleness from 6.12 to 5.35; $K=10$ provides no further mean throughput gain. For the tested values K = 8 and K = 10, no rejections occurred. It is similar to the vanilla AsyncOPSD.  So we choose $K=6$ as a conservative throughput--freshness compromise. Appendix~\ref{app:asyncopsd_staleness_ablation} reports the details. 

\paragraph{Entropy-Weighting Ablation.}
On Qwen3-1.7B, we select $\beta_H=0.1$ based on evaluation results among the tested coefficients $\{0.03,0.1,0.3,1.0\}$. Full results are provided in Appendix~\ref{app:asyncopsd_entropy_ablation}.

\vspace{-0.1in}
\section*{Conclusion}
We introduced AsyncOPSD, which improves training efficiency by decoupling rollout generation from learner updates. Our analysis bounds the resulting objective discrepancy in terms of prefix staleness and teacher entropy, motivating staleness control and token-level entropy weighting. Across three model sizes and mathematical reasoning benchmarks, AsyncOPSD achieves $1.96$--$2.25\times$ the sample throughput of an optimized synchronous baseline while increasing average benchmark accuracy over OPSD by $0.37$--$1.10$ percentage points. 
\section*{Limitations}
Our experiments use eight NVIDIA A100 GPUs and models ranging from 1.7B to 8B parameters. Within this budget, comparisons across three model sizes and targeted ablations provide evidence for the throughput benefits of asynchronous execution and the effectiveness of staleness control and entropy weighting in the evaluated settings. A broader exploration of model scales and training configurations is left to future work.




\bibliography{custom}

\newpage

\newpage
\appendix

\section{Theoretical Analysis}
\label{sec:appendix}
\label{app:aopsd_theory}

This appendix supplies the sampling assumptions and intermediate
lemmas used in Theorem~\ref{thm:aopsd_discrepancy}, followed by its full
proof. The learner update $k$ and rollout version $r\le k$ are fixed.
The first three subsections evaluate both the student and privileged
teacher at the current parameters $\theta_k$ when computing the loss.
Appendix~\ref{app:aopsd_frozen_teacher} extends the result to a teacher
frozen at initialization, as used in our experiments.

\subsection{Assumptions and prefix distributions}

\begin{aopsdassumption}[Common sampling protocol and coverage]
\label{ass:aopsd_sampling}
The rollout and current prefix laws use the same problem--reference
distribution $\nu(x,a)$, the same fixed token-position probabilities
$\alpha_h$ for $h\in\{1,\ldots,T\}$, and the same continuation
convention after termination. Prefixes are generated autoregressively
using the corresponding student policies $p_r$ and $p_k$, and
$\mu_k\ll\mu_r$.
\end{aopsdassumption}

The notation $\mu_k\ll\mu_r$ means absolute continuity. In the
discrete prefix setting, it is the support condition
\[
\mu_r(s)=0\quad\Longrightarrow\quad\mu_k(s)=0.
\]
Equivalently, every prefix with positive current-policy probability
must also have positive rollout-policy probability. This condition
allows the current prefix law to be represented by reweighting the
rollout law; it does not compare the magnitudes of the probabilities.
It concerns prefixes the rollout policy could generate, rather than
requiring every such prefix to appear in a finite realized buffer.

Under Assumption~\ref{ass:aopsd_sampling}, for $j\in\{r,k\}$,
\begin{equation}
\mu_j(s_h)
=\nu(x,a)\,\alpha_h\prod_{u<h}p_j(y_u\mid s_u).
\label{eq:aopsd_prefix_law}
\end{equation}
Full-support softmax sampling on a common vocabulary is sufficient
for the coverage condition. The common problem and token-position
factors cancel when comparing the two prefix laws.

Let $z_k^S(s)$ and $z_k^T(s)$ be the student and privileged teacher
logits, respectively. They satisfy
\begin{equation}
\begin{aligned}
p_k&=\operatorname{softmax}(z_k^S),\\
q_k&=\operatorname{softmax}(z_k^T),\\
b_k(s)&=z_k^T(s)-z_k^S(s).
\end{aligned}
\label{eq:aopsd_context_contrast}
\end{equation}

\begin{aopsdassumption}[Bounded privileged logit contrast]
\label{ass:aopsd_contrast}
There exists a finite constant $B\ge0$ such that, at the analyzed
update and uniformly over the support of $\mu_r$,
\begin{equation}
\operatorname{osc}(b_k(s))
:=\max_v b_{k,v}(s)-\min_v b_{k,v}(s)\le B.
\label{eq:aopsd_contrast_assumption}
\end{equation}
\end{aopsdassumption}

The finite-support probability lower bound in Assumption~3.1 of
$f$-OPD~\cite{chen2026boldsymbol} is sufficient for this contrast
condition when it holds on the same support used to evaluate the
loss. Indeed, if $p_k(v\mid s),q_k(v\mid s)\ge\epsilon>0$, then
\[
-\log(1/\epsilon)
\le\log\frac{q_k(v\mid s)}{p_k(v\mid s)}
\le\log(1/\epsilon).
\]
The vector $b_k(s)$ differs from $\log(q_k/p_k)$ only by a common
scalar, so its oscillation is at most $2\log(1/\epsilon)$.
A lower bound imposed only on truncated distributions does not
automatically imply the full-vocabulary condition. We state the
contrast assumption directly because parameter sharing alone does
not guarantee a small $B$, and $C_B=B(e^B-1)/2$ can be large when
the admissible contrast is large.

\subsection{Supporting lemmas}

\begin{aopsdlemma}[Change of measure and student staleness]
\label{lem:aopsd_staleness}
Under Assumption~\ref{ass:aopsd_sampling}, define
\begin{equation}
\rho_{k,r}(s)=\frac{\mu_k(s)}{\mu_r(s)},
\qquad d_{k,r}(s)=|1-\rho_{k,r}(s)|.
\label{eq:aopsd_staleness}
\end{equation}
The prefix likelihood ratio satisfies
\begin{equation}
\rho_{k,r}(s_h)
=\prod_{u<h}\frac{p_k(y_u\mid s_u)}{p_r(y_u\mid s_u)}.
\label{eq:aopsd_prefix_ratio}
\end{equation}
Whenever the two objectives are finite,
\begin{equation}
\begin{aligned}
\Delta_k
&=\left|\mathbb E_{\mu_r}
\left[(1-\rho_{k,r}(s))F_k(s)\right]\right|\\
&\le\mathbb E_{\mu_r}
\left[d_{k,r}(s)F_k(s)\right].
\end{aligned}
\label{eq:aopsd_staleness_loss}
\end{equation}
Moreover, $\mathbb E_{\mu_r}[d_{k,r}(s)]\le2$.
\end{aopsdlemma}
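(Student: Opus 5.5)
The plan is to prove the three claims of Lemma~\ref{lem:aopsd_staleness} in order: the product formula for the likelihood ratio, then the change-of-measure identity and bound, then the bound on $\mathbb E_{\mu_r}[d_{k,r}]$. All three come directly from the factorized prefix law \eqref{eq:aopsd_prefix_law} under Assumption~\ref{ass:aopsd_sampling}.

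\textbf{Product formula.} Take any prefix $s_h$ with $\mu_r(s_h)>0$. Writing $\mu_k(s_h)$ and $\mu_r(s_h)$ via \eqref{eq:aopsd_prefix_law}, the common factors $\nu(x,a)\,\alpha_h$ are identical and positive, so they cancel. What remains is the ratio of the autoregressive products, giving \eqref{eq:aopsd_prefix_ratio}. Each denominator factor $p_r(y_u\mid s_u)$ is positive because $\mu_r(s_h)>0$, and the empty product for $h=1$ equals one. On the set where $\mu_r=0$, absolute continuity gives $\mu_k=0$, so $\rho_{k,r}$ can be defined arbitrarily (say as $0$) without affecting any expectation under $\mu_r$.

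\textbf{Change of measure.} Since $\mu_k\ll\mu_r$ on a discrete space,
\[
J_{\mathrm{sync}}^k=\sum_s\mu_k(s)F_k(s)=\sum_{s:\mu_r(s)>0}\mu_r(s)\rho_{k,r}(s)F_k(s)=\mathbb E_{\mu_r}[\rho_{k,r}F_k].
\]
Both objectives are finite by hypothesis, so the two expectations can be subtracted, which gives
\[
J_{\mathrm{async}}^k-J_{\mathrm{sync}}^k=\mathbb E_{\mu_r}[(1-\rho_{k,r})F_k].
\]
Taking absolute values and using $F_k\ge0$ (a KL divergence, with the stop-gradient not changing its value) gives the inequality in \eqref{eq:aopsd_staleness_loss}, because $|(1-\rho)F|=dF$ pointwise. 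The one technical point is justifying the split of expectations: $\mathbb E_{\mu_r}[\rho F]=J_{\mathrm{sync}}^k<\infty$ and $\mathbb E_{\mu_r}[F]=J_{\mathrm{async}}^k<\infty$, so the difference is integrable and linearity applies. This is the only place where the finiteness hypothesis is used.

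\textbf{Bound on $\mathbb E_{\mu_r}[d_{k,r}]$.} Apply the triangle inequality $|1-\rho|\le1+\rho$ and take expectations:
\[
\mathbb E_{\mu_r}[d_{k,r}]\le 1+\mathbb E_{\mu_r}[\rho_{k,r}]=1+\sum_{s:\mu_r(s)>0}\mu_k(s)=2,
\]
where the sum equals $1$ by absolute continuity.

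I do not expect a real obstacle. The only care needed is in the support bookkeeping: defining $\rho$ off the support of $\mu_r$ and using coverage so that no mass of $\mu_k$ is lost. Horizon and termination issues are already absorbed by the common continuation convention in Assumption~\ref{ass:aopsd_sampling}.
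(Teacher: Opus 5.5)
Your proposal is correct and follows the paper's proof essentially step for step: cancel the common factors $\nu(x,a)\alpha_h$, change measure via absolute continuity, subtract the two finite objectives, apply the triangle inequality with $F_k\ge 0$, and bound $\mathbb{E}_{\mu_r}[d_{k,r}]$ by $\mathbb{E}_{\mu_r}[1+\rho_{k,r}]=2$. Your extra support bookkeeping (positive denominator factors, and defining $\rho_{k,r}$ arbitrarily where $\mu_r=0$) only makes explicit what the paper states as the ratio existing $\mu_r$-almost everywhere.
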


\begin{proof}
Dividing the factorizations in Eq.~\eqref{eq:aopsd_prefix_law}
cancels $\nu(x,a)$ and $\alpha_h$, which gives
Eq.~\eqref{eq:aopsd_prefix_ratio}. Absolute continuity ensures that
the likelihood ratio exists $\mu_r$-almost everywhere and permits
the change of measure
\[
J_{\mathrm{sync}}^k
=\mathbb E_{\mu_r}[\rho_{k,r}(s)F_k(s)].
\]
Both objectives use exactly the same current loss $F_k(s)$;
only the distribution over prefixes changes. Subtracting the
last expression from $J_{\mathrm{async}}^k$ therefore yields
the equality in Eq.~\eqref{eq:aopsd_staleness_loss}.
The triangle inequality and $F_k(s)\ge0$ give its upper bound.
Finally, since $\mathbb E_{\mu_r}[\rho_{k,r}]=1$,
\[
\mathbb E_{\mu_r}[d_{k,r}]
\le\mathbb E_{\mu_r}[1+\rho_{k,r}]=2.
\]
Equivalently, this expectation is
$2\operatorname{TV}(\mu_r,\mu_k)$ under the convention
$\operatorname{TV}(P,Q)=\tfrac12\|P-Q\|_1$.
\end{proof}

\begin{aopsdlemma}[Entropy-sensitive self-distillation loss]
\label{lem:aopsd_entropy}
Under Assumption~\ref{ass:aopsd_contrast}, the current forward-KL
loss satisfies, for every prefix in the support of $\mu_r$,
\[
0\le F_k(s)
\le C_B(1-e^{-H_k(s)})
\le C_BH_k(s),
\]
where $H_k(s)$ and $C_B$ are defined in
Eq.~\eqref{eq:aopsd_entropy}.
\end{aopsdlemma}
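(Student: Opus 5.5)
The plan is to express the forward KL through the logit contrast $b=b_k(s)$ and control it with an exponential-family (tilting) argument. Two inequalities then connect it to entropy: $1-q^\star\le 1-e^{-H_k(s)}$, where $q^\star:=\max_v q_k(v\mid s)$, and the elementary bound $1-e^{-H}\le H$. Nonnegativity $F_k(s)\ge0$ is Gibbs' inequality. The stop-gradient does not change the value of $F_k$, so it plays no role.

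\textbf{Step 1 (tilted representation).} By Eq.~\eqref{eq:aopsd_context_contrast}, $q_k(v)\propto p_k(v)e^{b_v}$. Set $\psi(t):=\log\mathbb E_{p_k}[e^{tb}]$ and let $p_t\propto p_ke^{tb}$, so that $p_0=p_k$ and $p_1=q_k$. Then
\[
F_k(s)=\mathbb E_{q_k}[b]-\psi(1)=\psi'(1)-\psi(1)+\psi(0)=\int_0^1 t\,\psi''(t)\,dt,
\]
with $\psi''(t)=\operatorname{Var}_{p_t}(b)$. The last equality is Taylor's formula with integral remainder, expanding $\psi(0)$ around $t=1$.

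\textbf{Step 2 (variance controlled by off-mode mass).} Let $v^\star$ be a mode of $q_k$. Since the variance is at most the second moment about any constant, taking the constant to be $b_{v^\star}$ gives
\[
\operatorname{Var}_{p_t}(b)\le\mathbb E_{p_t}\big[(b-b_{v^\star})^2\big]\le B^2\bigl(1-p_t(v^\star)\bigr),
\]
using Assumption~\ref{ass:aopsd_contrast}. Next compare $p_t$ with $q_k$. We have $p_t(v)\propto q_k(v)e^{-(1-t)b_v}$, and the oscillation of $-(1-t)b$ is at most $(1-t)B$. Hence the likelihood ratio $p_t(v)/q_k(v)$ is at most $e^{(1-t)B}$, and
\[
1-p_t(v^\star)=\sum_{v\ne v^\star}p_t(v)\le e^{(1-t)B}(1-q^\star).
\]
This step is the main obstacle. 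The variance bound only works because we center at the teacher's mode, and we must justify that the tilt between $p_t$ and $q_k$ costs at most the factor $e^{(1-t)B}$. The delicate point is that normalizing constants could inflate this factor, so the plan is to check the ratio bound carefully via $Z_t\ge e^{\min_v(-(1-t)b_v)}$.

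\textbf{Step 3 (integrate and compare constants).} Combining Steps 1 and 2,
\[
F_k(s)\le B^2(1-q^\star)\int_0^1 t\,e^{(1-t)B}\,dt=(e^B-1-B)(1-q^\star).
\]
To compare with $C_B$, expand both sides in powers of $B$. The coefficient of $B^m$ is $1/m!$ on the left and $1/(2(m-1)!)$ on the right, and $1/m!\le 1/(2(m-1)!)$ for all $m\ge2$. This gives $e^B-1-B\le B(e^B-1)/2=C_B$.

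\textbf{Step 4 (entropy).} Since $H_k(s)=\mathbb E_{q_k}[-\log q_k]\ge-\log q^\star$, we get $e^{-H_k(s)}\le q^\star$, so $1-q^\star\le 1-e^{-H_k(s)}$. The inequality $1-e^{-H}\le H$ holds for all $H\ge0$. Chaining these with Step 3 yields $0\le F_k(s)\le C_B(1-e^{-H_k(s)})\le C_BH_k(s)$ on the support of $\mu_r$. Theorem~\ref{thm:aopsd_discrepancy} then follows by inserting this bound into Lemma~\ref{lem:aopsd_staleness}.
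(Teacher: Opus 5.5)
Your proof is correct, and it takes a genuinely different route from the paper's.

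\textbf{The paper's route.} The paper writes $p_v=q_v e^{\delta_v}/Z$ with $0\le\delta_v\le B$ and bounds the forward KL by the symmetrized KL, $\mathrm{KL}(q\|p)+\mathrm{KL}(p\|q)=\sum_v(p_v-q_v)\delta_v\le B\,\mathrm{TV}(p,q)$. It then shows $\mathrm{TV}(p,q)\le\frac{e^B-1}{2}(1-\sum_v q_v^2)$ by a pairwise triangle inequality, and finishes with Jensen's $e^{-H}\le\sum_v q_v^2$. The constant $C_B$ is just the product of the factors $B$ and $(e^B-1)/2$ from those two steps.

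\textbf{Your route.} You use the tilted path $p_t$, the Taylor identity $F_k=\int_0^1 t\operatorname{Var}_{p_t}(b)\,dt$, a variance bound by the mass off the teacher's mode, and a tilt likelihood-ratio bound. This gives $F_k\le(e^B-1-B)(1-\max_v q_v)$, which you then relax to the stated form.

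\textbf{What each buys.} Your intermediate inequality is sharper on both factors. First, $e^B-1-B\le C_B$, with a gap of roughly a factor $B/2$ for large $B$. Second, $1-\max_v q_v\le 1-\sum_v q_v^2$ because $\sum_v q_v^2\le\max_v q_v$; you use min-entropy where the paper uses collision entropy. Your bound is also first-order tight: for two tokens with $q_k=(1-\epsilon,\epsilon)$ and $b=(0,-B)$, one has $F_k=\epsilon(e^B-1-B)+O(\epsilon^2)$. The paper's argument, in exchange, is pure finite-sum algebra with no calculus. It lands on $C_B$ without a separate series comparison and yields the TV and symmetric-KL bounds as by-products.

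\textbf{Your flagged obstacle.} The step you mark as the main obstacle is not delicate. From $Z_t\ge e^{-(1-t)\max_u b_u}$ you get $p_t(v)/q_k(v)=e^{-(1-t)b_v}/Z_t\le e^{(1-t)(\max_u b_u-b_v)}\le e^{(1-t)B}$ at once.
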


\begin{proof}
Suppress $k,s$ and write $p=\operatorname{softmax}(z^S)$ and
$q=\operatorname{softmax}(z^T)$. Because $z^S=z^T-b$, set
\[
\delta_v=-b_v+\max_u b_u.
\]
Assumption~\ref{ass:aopsd_contrast} gives $0\le\delta_v\le B$.
Adding a common scalar to logits does not change their softmax,
so
\begin{equation}
p_v=\frac{q_v e^{\delta_v}}{Z},
\qquad Z=\sum_uq_u e^{\delta_u}\ge1.
\label{eq:aopsd_exponential_tilt}
\end{equation}

First, bound the distance between the two predictive distributions:
\begin{align}
\operatorname{TV}(p,q)
&=\frac{1}{2Z}\sum_vq_v|e^{\delta_v}-Z|
\nonumber\\
&\le\frac{1}{2Z}\sum_{v,u}q_vq_u
|e^{\delta_v}-e^{\delta_u}|
\nonumber\\
&\le\frac{e^B-1}{2}
\left(1-\sum_vq_v^2\right).
\label{eq:aopsd_tv_tilt}
\end{align}
The first inequality substitutes $Z=\sum_uq_u e^{\delta_u}$
and applies the triangle inequality. The second uses $Z\ge1$,
$|e^{\delta_v}-e^{\delta_u}|\le e^B-1$, and the fact that the
summand is zero for $u=v$.

Next, Eq.~\eqref{eq:aopsd_exponential_tilt} implies
\[
\log\frac{p_v}{q_v}=\delta_v-\log Z.
\]
Consequently,
\begin{align}
\operatorname{KL}(q\|p)
&=\log Z-\sum_vq_v\delta_v,
\nonumber\\
\operatorname{KL}(p\|q)
&=\sum_vp_v\delta_v-\log Z.
\label{eq:aopsd_kl_tilt}
\end{align}
Adding the two identities cancels the normalization term. With
$I_+=\{v:p_v>q_v\}$, the remaining expression obeys
\begin{align}
&\operatorname{KL}(q\|p)+\operatorname{KL}(p\|q)
\nonumber\\
&\qquad=\sum_v(p_v-q_v)\delta_v
\nonumber\\
&\qquad\le B\sum_{v\in I_+}(p_v-q_v)
=B\operatorname{TV}(p,q).
\label{eq:aopsd_symmetric_kl}
\end{align}
Here the components outside $I_+$ contribute nonpositive terms,
and $\delta_v\le B$ on $I_+$. Since the reverse KL is nonnegative,
combining this inequality with Eq.~\eqref{eq:aopsd_tv_tilt} gives
\begin{equation}
F_k(s)\le
\frac{B(e^B-1)}{2}
\left(1-\sum_vq_v^2\right).
\label{eq:aopsd_gini_loss_bound}
\end{equation}

Finally, Jensen's inequality for the convex function $-\log$
yields
\[
H(q)=\sum_vq_v[-\log q_v]
\ge-\log\!\left(\sum_vq_v^2\right).
\]
Thus $e^{-H(q)}\le\sum_vq_v^2$, and
\[
1-\sum_vq_v^2\le1-e^{-H(q)}\le H(q).
\]
Substituting this relation into
Eq.~\eqref{eq:aopsd_gini_loss_bound}, together with the
nonnegativity of KL, proves the lemma.
\end{proof}

\subsection{Proof of Theorem~\ref{thm:aopsd_discrepancy}}

\begin{proof}[Proof of Theorem~\ref{thm:aopsd_discrepancy}]
By Assumptions~\ref{ass:aopsd_sampling} and~\ref{ass:aopsd_contrast},
the common prefix protocol, coverage, and uniform logit contrast
condition hold. Lemma~\ref{lem:aopsd_entropy} therefore establishes
the per-prefix
loss bound in Eq.~\eqref{eq:aopsd_loss_entropy}.
In particular, $0\le F_k(s)\le C_B$ on the rollout support and,
by coverage, on the current-policy support. Hence both population
objectives are finite. Lemma~\ref{lem:aopsd_staleness} also gives
$\mathbb E_{\mu_r}[d_{k,r}]\le2$, ensuring that the product with
the bounded loss envelope is integrable.

Apply the change-of-measure lemma and then the entropy-sensitive
loss bound pointwise:
\begin{align}
\Delta_k
&=\left|\mathbb E_{\mu_r}
\left[(1-\rho_{k,r}(s))F_k(s)\right]\right|
\nonumber\\
&\le\mathbb E_{\mu_r}
\left[d_{k,r}(s)F_k(s)\right]
\nonumber\\
&\le C_B\,\mathbb E_{\mu_r}
\left[d_{k,r}(s)(1-e^{-H_k(s)})\right]
\nonumber\\
&\le C_B\,\mathbb E_{\mu_r}
\left[d_{k,r}(s)H_k(s)\right].
\label{eq:aopsd_appendix_proof}
\end{align}
The last step uses $1-e^{-H}\le H$ for $H\ge0$ and
$d_{k,r}(s)\ge0$. This proves Eq.~\eqref{eq:aopsd_main_bound}.
The product remains inside the expectation throughout, so
independence of staleness and entropy is not required.
\end{proof}

\paragraph{Sampling probabilities and loss normalization.}
The likelihood ratio must describe the actual distributions that
generate prefixes. If generation uses a different temperature or
other probability transformation, the prefix laws and ratio must use
those sampling probabilities; the loss envelope uses the $p_k,q_k$
appearing in the KL. Hard top-$p$ or top-$k$ truncation can violate
coverage and requires separate support handling. The simple product
formula also assumes the common fixed token-position distribution
in Assumption~\ref{ass:aopsd_sampling}; filtering or response-length
normalization may alter the effective training-prefix laws.

\paragraph{Meaning of the two factors.}
The staleness signal accumulates probability mismatch over the previous tokens of a prefix. It is not interchangeable with policy update lag or a local one-token rollout KL without an additional argument accounting for earlier positions. Teacher entropy controls the current loss envelope under the contrast assumption; it does not represent a separate cached-target mismatch. Because both learner-side predictions use $\theta_k$ and the privileged reference is fixed, the comparison changes only the prefix law. The theorem therefore concerns the generating population laws, rather than the empirical distribution of a finite realized buffer.

\subsection{Extension to a frozen privileged teacher}
\label{app:aopsd_frozen_teacher}

We now fix the privileged teacher at the initial parameters
$\theta_0$, matching the teacher strategy in
Section~\ref{sec:asyncopsd_setup}. For $s_h=(x,a,h,y_{<h})$, define
\begin{equation}
\begin{aligned}
q_0(\cdot\mid s_h)
&=\pi_{\theta_0}(\cdot\mid x,a,y_{<h}),\\
F_k^{\mathrm{fr}}(s)
&=\operatorname{KL}\!\left(q_0(\cdot\mid s)\,\middle\|\,
p_k(\cdot\mid s)\right),\\
H_0(s)&=-\sum_{v\in\mathcal V}q_0(v\mid s)\log q_0(v\mid s).
\end{aligned}
\label{eq:aopsd_frozen_loss}
\end{equation}
The teacher and its entropy are fixed functions of the prefix; they
can still vary across prefixes. The student and the prefix laws
$\mu_r,\mu_k$ remain as defined above. Set
\begin{equation}
\begin{aligned}
J_{\mathrm{async,fr}}^k&=\mathbb E_{\mu_r}[F_k^{\mathrm{fr}}(s)],\\
J_{\mathrm{sync,fr}}^k&=\mathbb E_{\mu_k}[F_k^{\mathrm{fr}}(s)],\\
\Delta_k^{\mathrm{fr}}
&=\left|J_{\mathrm{async,fr}}^k-J_{\mathrm{sync,fr}}^k\right|.
\end{aligned}
\label{eq:aopsd_frozen_objectives}
\end{equation}
Both objectives use the same frozen teacher and current student;
only their prefix distributions differ.

\begin{aopsdassumption}[Bounded frozen-teacher logit contrast]
\label{ass:aopsd_frozen_contrast}
Let $q_0=\operatorname{softmax}(z_0^T)$ and
$p_k=\operatorname{softmax}(z_k^S)$. At the analyzed update, there
exists a finite $B_k^{\mathrm{fr}}\ge0$ such that, uniformly over
the support of $\mu_r$,
\begin{equation}
\operatorname{osc}\!\left(z_0^T(s)-z_k^S(s)\right)
\le B_k^{\mathrm{fr}}.
\label{eq:aopsd_frozen_contrast}
\end{equation}
The logits are those used in the loss, including any temperature scaling.
\end{aopsdassumption}

Here the contrast is the teacher--student logit difference on the same prefix. Bounding its oscillation is equivalent to bounding the difference in their log odds for every token pair $u,v$:
\[
\left|\log\frac{q_0(v\mid s)}{q_0(u\mid s)}
-\log\frac{p_k(v\mid s)}{p_k(u\mid s)}\right|
\le B_k^{\mathrm{fr}}.
\]
This is a regularity assumption on the predictive distributions, not an additional training control.

\begin{aopsdtheorem}[Frozen-teacher staleness--entropy bound]
\label{thm:aopsd_frozen_discrepancy}
Under Assumptions~\ref{ass:aopsd_sampling}
and~\ref{ass:aopsd_frozen_contrast}, let $d_{k,r}$ be the student
prefix staleness in Eq.~\eqref{eq:aopsd_staleness}, and write
$C_B=B(e^B-1)/2$. Then
\begin{equation}
\begin{aligned}
0\le F_k^{\mathrm{fr}}(s)
&\le C_{B_k^{\mathrm{fr}}}(1-e^{-H_0(s)})\\
&\le C_{B_k^{\mathrm{fr}}}H_0(s),
\end{aligned}
\label{eq:aopsd_frozen_entropy}
\end{equation}
and
\begin{equation}
\begin{aligned}
\Delta_k^{\mathrm{fr}}
&\le C_{B_k^{\mathrm{fr}}}\,
\mathbb E_{\mu_r}\!\left[d_{k,r}(s)(1-e^{-H_0(s)})\right]\\
&\le C_{B_k^{\mathrm{fr}}}\,
\mathbb E_{\mu_r}\!\left[d_{k,r}(s)H_0(s)\right].
\end{aligned}
\label{eq:aopsd_frozen_bound}
\end{equation}
\end{aopsdtheorem}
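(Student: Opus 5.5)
The plan is to reuse the two supporting lemmas almost verbatim, substituting the frozen teacher $q_0$ for $q_k$. Lemma~\ref{lem:aopsd_entropy} never uses parameter sharing. It only uses that teacher and student are softmaxes of logit vectors whose difference has oscillation at most $B$. Likewise, Lemma~\ref{lem:aopsd_staleness} never uses the form of the loss beyond nonnegativity and finiteness. So the proof splits into a pointwise loss envelope and a change-of-measure step.

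\textbf{Step 1 (pointwise envelope).} Fix $s$ in the support of $\mu_r$. Set $p=p_k(\cdot\mid s)=\operatorname{softmax}(z_k^S)$, $q=q_0(\cdot\mid s)=\operatorname{softmax}(z_0^T)$, and $b=z_0^T-z_k^S$. Assumption~\ref{ass:aopsd_frozen_contrast} gives $\operatorname{osc}(b)\le B_k^{\mathrm{fr}}$. Define $\delta_v=\max_u b_u-b_v\in[0,B_k^{\mathrm{fr}}]$, so that $p_v=q_ve^{\delta_v}/Z$ with $Z\ge1$. We then rerun the chain of Eqs.~\eqref{eq:aopsd_tv_tilt}--\eqref{eq:aopsd_gini_loss_bound} with $B$ replaced by $B_k^{\mathrm{fr}}$:
\begin{itemize}
\item the TV bound through the Gini term $1-\sum_v q_v^2$;
\item the symmetric-KL identity $\sum_v(p_v-q_v)\delta_v\le B_k^{\mathrm{fr}}\operatorname{TV}(p,q)$;
\item dropping the nonnegative reverse KL.
\end{itemize}
Together these give $F_k^{\mathrm{fr}}(s)\le C_{B_k^{\mathrm{fr}}}(1-\sum_vq_v^2)$. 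Jensen's inequality then yields $1-\sum_v q_v^2\le 1-e^{-H_0(s)}\le H_0(s)$, which gives Eq.~\eqref{eq:aopsd_frozen_entropy}. Nonnegativity is just nonnegativity of KL.

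\textbf{Step 2 (change of measure).} Under Assumption~\ref{ass:aopsd_sampling}, the prefix laws $\mu_r,\mu_k$ are exactly those of Lemma~\ref{lem:aopsd_staleness}. The prefixes are still generated by the student, so $\rho_{k,r}$ and $d_{k,r}$ are unchanged. By Step 1, $0\le F_k^{\mathrm{fr}}\le C_{B_k^{\mathrm{fr}}}$ on the support of $\mu_r$, and by coverage also on the support of $\mu_k$. Hence both frozen objectives are finite. Writing $J^k_{\mathrm{sync,fr}}=\mathbb E_{\mu_r}[\rho_{k,r}F_k^{\mathrm{fr}}]$ gives
\[
\Delta_k^{\mathrm{fr}}=\left|\mathbb E_{\mu_r}[(1-\rho_{k,r})F_k^{\mathrm{fr}}]\right|\le\mathbb E_{\mu_r}[d_{k,r}F_k^{\mathrm{fr}}].
\]
Inserting the envelope from Step 1 pointwise, which is valid since $d_{k,r}\ge0$, yields both inequalities in Eq.~\eqref{eq:aopsd_frozen_bound}. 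Integrability follows from $\mathbb E_{\mu_r}[d_{k,r}]\le2$.

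\textbf{Main obstacle.} The only real check is that nothing in Lemma~\ref{lem:aopsd_entropy} secretly relies on teacher and student sharing $\theta_k$. I expect it does not: the proof uses only the exponential-tilt representation, and that holds for any two softmaxes with bounded logit contrast. The second point to confirm is that the frozen teacher changes only $F$, not $\mu_r$ or $\mu_k$. This holds because $q_0$ is a fixed function of the prefix, and it is precisely why the staleness factor $d_{k,r}$ carries over unchanged.
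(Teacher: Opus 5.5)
Your proposal is correct and matches the paper's proof essentially step for step. The paper also notes that Lemma~\ref{lem:aopsd_entropy} uses only the exponential-tilt representation $p_k(v\mid s)=q_0(v\mid s)e^{\delta_v}/Z$ with $0\le\delta_v\le B_k^{\mathrm{fr}}$; it reruns the TV and symmetric-KL inequalities, observes that the loss is bounded on both prefix supports, and applies the change of measure of Lemma~\ref{lem:aopsd_staleness}, substituting the envelope pointwise.
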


\begin{proof}
The proof of Lemma~\ref{lem:aopsd_entropy} uses only the softmax
representations and the bounded oscillation of their logit difference.
For a fixed prefix, set $b=z_0^T-z_k^S$ and
$\delta_v=-b_v+\max_u b_u$. Then
$0\le\delta_v\le B_k^{\mathrm{fr}}$ and
\[
p_k(v\mid s)
=\frac{q_0(v\mid s)e^{\delta_v}}
{\sum_u q_0(u\mid s)e^{\delta_u}}.
\]
Thus the same total-variation and symmetric-KL inequalities give
\begin{align*}
F_k^{\mathrm{fr}}(s)
&\le C_{B_k^{\mathrm{fr}}}
\left(1-\sum_v q_0(v\mid s)^2\right)\\
&\le C_{B_k^{\mathrm{fr}}}(1-e^{-H_0(s)}).
\end{align*}
Nonnegativity of KL and $1-e^{-H_0}\le H_0$ prove
Eq.~\eqref{eq:aopsd_frozen_entropy}. In particular, the loss is
bounded on both prefix supports. By the same change of measure as
in Lemma~\ref{lem:aopsd_staleness},
\begin{align*}
\Delta_k^{\mathrm{fr}}
&=\left|\mathbb E_{\mu_r}
\left[(1-\rho_{k,r}(s))F_k^{\mathrm{fr}}(s)\right]\right|\\
&\le\mathbb E_{\mu_r}\!\left[d_{k,r}(s)F_k^{\mathrm{fr}}(s)\right].
\end{align*}
Substitution gives Eq.~\eqref{eq:aopsd_frozen_bound}.
\end{proof}

\paragraph{Accounting for student drift.}
The frozen-teacher contrast assumption must be checked separately from Assumption~\ref{ass:aopsd_contrast}. An optional sufficient condition separates the initial teacher--student difference from subsequent student drift. Let $z_0^S$ denote the initial student's logits. If, uniformly over the analyzed rollout support,
\begin{equation}
\begin{aligned}
\operatorname{osc}(z_0^T(s)-z_0^S(s))&\le B_{\mathrm{init}},\\
\operatorname{osc}(z_k^S(s)-z_0^S(s))&\le D_k,
\end{aligned}
\label{eq:aopsd_frozen_drift}
\end{equation}
then subadditivity of oscillation yields
\begin{align*}
\operatorname{osc}(z_0^T-z_k^S)
&\le\operatorname{osc}(z_0^T-z_0^S)\\
&\quad+\operatorname{osc}(z_k^S-z_0^S)\\
&\le B_{\mathrm{init}}+D_k.
\end{align*}
Hence $B_{\mathrm{init}}+D_k$ is one admissible choice of
$B_k^{\mathrm{fr}}$; the theorem only requires
Assumption~\ref{ass:aopsd_frozen_contrast}, not this decomposition.
Freezing the teacher does not itself bound accumulated student drift or guarantee the same numerical constant as in the evolving-teacher
case. A bound on rollout age $k-r$ also does not by itself bound drift from initialization.

\paragraph{Implications and scope.}
The same interaction between prefix staleness and teacher entropy therefore holds for a frozen teacher under the stated contrast bound. In particular, $\mu_r=\mu_k$ implies $\Delta_k^{\mathrm{fr}}=0$, even if the current student differs substantially from the teacher.
There is no teacher-version mismatch on a fixed prefix: cached and recomputed frozen-teacher predictions coincide when the privileged context and evaluation settings are unchanged. Low teacher entropy alone cannot ensure a small distillation loss without a uniform bound
on the teacher--student logit contrast: for
$q_\epsilon=(1-\epsilon,\epsilon)$ and $p=(1/2,1/2)$,
$H(q_\epsilon)\to0$ but
$\operatorname{KL}(q_\epsilon\|p)\to\log 2$ as $\epsilon\to0$; the logit contrast then diverges.

\section{Experiment Details}

\begin{table}[h]
\centering
\small
\setlength{\tabcolsep}{5pt}
\begin{tabular}{ll}
\toprule
Parameter & Training \\
\midrule
Optimizer & AdamW \\
Learning Rate & $5 \times 10^{-6}$   \\
Learning Rate Scheduler & Constant \\
Effective Batch Size & 32 \\
Dataset shuffling seed & 42 \\
\midrule
LoRA Rank  $r$ &  64 \\
LoRA Alpha $\alpha$ &  128 \\
LoRA Target Modules & \makecell[l]{q\_proj, k\_proj, v\_proj,\\o\_proj, gate\_proj,\\up\_proj, down\_proj} \\
\midrule
Max Completion Length & 1024 \\
\makecell[l]{Number of Generations\\per Prompt} & 1 \\
Sampling Temperature & 1.1 \\ 
Training Steps & 100 \\
\bottomrule
\end{tabular}
\caption{Training Configuration.}
\label{tab:trainig_config}
\end{table}

\begin{table}[h]
\centering
\small
\setlength{\tabcolsep}{5pt}
\begin{tabular}{ll}
\toprule
Parameter & Evaluation \\
\midrule
Max New Tokens & 38912 \\
Thinking Mode & Enabled \\
Top-p & 0.95 \\
Top-k & -1 \\
Min-p & 0.0 \\
Samples per Prompt & 12 \\
Temperature & 1.0 \\
\bottomrule
\end{tabular}
\caption{Evaluation Configuration.}
\label{tab:evaluation_config}
\end{table}

\begin{table}[h]
\centering
\small
\setlength{\tabcolsep}{5pt}
\begin{tabular}{llrr}
\toprule
Model & L/A & Samples/s $\uparrow$ & Mean staleness $\downarrow$  \\
\midrule
Qwen3-1.7B & 2/6 & $6.06 \pm 0.29$ & $9.54 \pm 0.43$ \\
Qwen3-1.7B & 3/5 & $6.68 \pm 0.25$ & $10.56 \pm 0.19$ \\
Qwen3-1.7B & 4/4 & $8.63 \pm 0.21$ & $7.93 \pm 0.16$ \\
Qwen3-1.7B & 5/3 & $7.21 \pm 0.15$ & $9.55 \pm 0.11$ \\
Qwen3-1.7B & 6/2 & $5.53 \pm 0.13$ & $5.86 \pm 0.05$ \\
\midrule
Qwen3-4B & 2/6 & $3.32 \pm 0.24$ & $10.65 \pm 0.43$ \\
Qwen3-4B & 3/5 & $3.76 \pm 0.28$ & $12.15 \pm 0.67$ \\
Qwen3-4B & 4/4 & $5.02 \pm 0.23$ & $9.38 \pm 0.32$ \\
Qwen3-4B & 5/3 & $4.10 \pm 0.29$ & $11.11 \pm 0.41$ \\
Qwen3-4B & 6/2 & $3.91 \pm 0.17$ & $7.27 \pm 0.18$ \\
\midrule
Qwen3-8B & 2/6 & $2.35 \pm 0.40$ & $8.68 \pm 0.71$ \\
Qwen3-8B & 3/5 & $2.83 \pm 0.25$ & $8.76 \pm 0.40$ \\
Qwen3-8B & 4/4 & $3.35 \pm 0.22$ & $7.88 \pm 0.29$ \\
Qwen3-8B & 5/3 & $3.28 \pm 0.25$ & $7.32 \pm 0.41$ \\
Qwen3-8B & 6/2 & $2.57 \pm 0.27$ & $6.63 \pm 0.43$ \\
\bottomrule
\end{tabular}
\caption{GPU-allocation ablation on eight A100s at sync interval 8, No max staleness limitation, and uniform token weights. L/A denotes learner/actor GPUs.}
\label{tab:asyncopsd_gpu_allocation}
\end{table}

\subsection{Staleness-Cutoff Ablation}
\label{app:asyncopsd_staleness_ablation}

We evaluate maximum rollout age $K\in\{4,6,8,10\}$ on Qwen3-1.7B using eight A100s split into four learner and four actor GPUs, with policy export interval $\texttt{sync\_step}=2$, clipping threshold $c=0.05$, and uniform token weights ($\beta_H=0$). This configuration is the staleness-control-only variant; vanilla AsyncOPSD has no active age cutoff.

Table~\ref{tab:asyncopsd_staleness_cutoff} reports the directly measured results of this additional sweep. We compute relative throughput as $R_K=\bar v_K/\bar v_6$, where $\bar v_K$ is the mean accepted-sample throughput across the three executions at $K$. The denominator is the measured $K=6$ mean from this sweep; the absolute measurements in Table~\ref{tab:asyncopsd_throughput} come from separate executions and are retained. 
The rejection percentage covers claimed microbatches and excludes generated samples left in the queue at shutdown.

Tightening the cutoff to $K=4$ lowers mean accepted staleness to 3.79 and rejects about one third of claimed rollouts. At $K=6$, mean throughput increases to 9.22 samples/s and rejection drops to 10.70\%. Relaxing the cutoff to $K=8$ yields a further 11.4\% throughput gain, accompanied by older accepted rollouts. No rejections occur at $K=8$ or $K=10$ in these executions, and their mean throughputs are similar. We retain $K=6$ to enforce a tighter age bound while preserving most of the observed peak throughput.

\begin{table}[h]
\centering
\small
\setlength{\tabcolsep}{5pt}
\begin{tabular}{rrrrrr}
\toprule
$K$ & Samples/s $\uparrow$ & $R_K$ & Mean Staleness $\downarrow$ & Rejected (\%) $\downarrow$ \\
\midrule
4 & $6.96 \pm 0.36$ & 0.755 & $3.79 \pm 0.21$ & $33.91 \pm 0.91$ \\
6 & $9.22 \pm 0.41$ & 1.000 & $5.35  \pm 0.28$ & $10.70 \pm 1.39$ \\
8 & $10.27 \pm 0.33$& 1.114 & $6.12 \pm 0.38$ & $0.00 \pm 0.00$ \\
10 & $10.22 \pm 0.31$ & 1.108 & $6.20 \pm 0.39$ & $0.00 \pm 0.00$ \\
\bottomrule
\end{tabular}
\caption{Staleness-cutoff ablation on Qwen3-1.7B with $\beta_H=0$. Relative throughput ($R_K=\bar v_K/\bar v_6$) is normalized to this sweep's measured $K=6$ mean before rounding. Staleness is measured in learner optimizer steps; rejection is the fraction of claimed queue microbatches discarded by the cutoff.}
\label{tab:asyncopsd_staleness_cutoff}
\end{table}

\begin{table}[b]
\centering
\small
\setlength{\tabcolsep}{10pt}
\begin{tabular}{rr}
\toprule
$\beta_H$ & AIME 2025 \\
\midrule
0.03 & 42.50 \\
\textbf{0.1} & \textbf{43.90} \\
0.3 & 42.78 \\
1.0 & 40.28 \\
\bottomrule
\end{tabular}
\caption{Effect of the entropy-weighting coefficient on Qwen3-1.7B. Bold marks the highest AIME 2025 Avg@12 among the tested values.}
\label{tab:asyncopsd_entropy_beta}
\end{table}

\subsection{Entropy-Weighting Ablation}
\label{app:asyncopsd_entropy_ablation}

We evaluate the entropy-weighting coefficient $\beta_H$ on Qwen3-1.7B using AIME 2025. Table~\ref{tab:asyncopsd_entropy_beta} reports Avg@12 for four coefficients. The selected value $\beta_H=0.1$ achieves the highest score, 43.90, while both the smaller coefficient 0.03 and the larger coefficients 0.3 and 1.0 yield lower scores. These results support $\beta_H=0.1$ within the tested model and benchmark.

\end{document}